\theoremstyle{plain}
\newtheorem{theorem}{Theorem}[section]
\newtheorem{lemma}[theorem]{Lemma}
\theoremstyle{definition}
\theoremstyle{remark}
\newtcolorbox[list inside=prompt,auto counter]{prompt}[1][]{
    colbacktitle=black!60,
    coltitle=white,
    fontupper=\footnotesize,
    boxsep=5pt,
    left=0pt,
    right=0pt,
    top=0pt,
    bottom=0pt,
    boxrule=1pt,
    #1,
}
\newcommand{\model}{\text{AdaDecode}\xspace}
\newcommand{\bs}[1]{\boldsymbol{#1}}
\newcommand{\ie}{{\sl i.e.}}
\newcommand{\eg}{{\sl e.g.}}
\newcommand{\cmark}{\ding{51}}
\newcommand{\xmark}{\ding{56}}
\icmltitlerunning{AdaDecode: Accelerating LLM Decoding with Adaptive Layer Parallelism}
\begin{document}


\twocolumn[

\icmltitle{AdaDecode: Accelerating LLM Decoding with Adaptive Layer Parallelism}




\begin{icmlauthorlist}
\icmlauthor{Zhepei Wei}{uva}
\icmlauthor{Wei-Lin Chen}{uva}
\icmlauthor{Xinyu Zhu}{uva}
\icmlauthor{Yu Meng}{uva}
\end{icmlauthorlist}


\icmlaffiliation{uva}{University of Virginia}

\icmlcorrespondingauthor{Zhepei Wei}{zhepei.wei@virginia.edu}
\icmlcorrespondingauthor{Wei-Lin Chen}{wlchen@virginia.edu}
\icmlcorrespondingauthor{Xinyu Zhu}{xinyuzhu@virginia.edu}
\icmlcorrespondingauthor{Yu Meng}{yumeng5@virginia.edu}

\icmlkeywords{Machine Learning, ICML}

\vskip 0.3in
]



\printAffiliationsAndNotice{}  

\begin{abstract}
Large language models (LLMs) are increasingly used for long-content generation (\eg, long Chain-of-Thought reasoning) where decoding efficiency becomes a critical bottleneck:
Autoregressive decoding is inherently limited by its sequential token generation process, where each token must be generated before the next can be processed. 
This sequential dependency restricts the ability to fully leverage modern hardware's parallel processing capabilities. 
Existing methods like speculative decoding and layer skipping offer potential speedups but have notable drawbacks: speculative decoding relies on an auxiliary ``drafter'' model, which can be challenging to acquire and increases memory overhead, while layer skipping may introduce discrepancies in the generated outputs due to the missing key-value cache at skipped layers.
In this work, we propose \model, which accelerates LLM decoding without requiring auxiliary models or changes to the original model parameters, while ensuring output consistency.
\model leverages the insight that many tokens---particularly simple or highly-predictable ones---can accurately be generated at intermediate layers, as further layers often do not significantly alter predictions once the model reaches a certain confidence. 
By adaptively generating tokens at intermediate layers when confidence is high, \model enables the next token’s computation to begin immediately. 
The remaining layer computations for early-predicted tokens are deferred and executed in parallel with subsequent tokens when needed, maximizing hardware utilization and reducing decoding latency. 
A final verification step ensures that early predictions match the results of standard autoregressive decoding, preserving output parity.
Experiments across diverse generation tasks shows that \model consistently achieves superior decoding throughput compared to baselines with up to {\bf 1.73$\times$}speedup, while guaranteeing output parity with standard autoregressive decoding.\footnote{Code and artifacts are available at \url{https://github.com/weizhepei/AdaDecode}.}
\end{abstract}

\section{Introduction}

\begin{figure*}[!t]
\includegraphics[width=\textwidth]{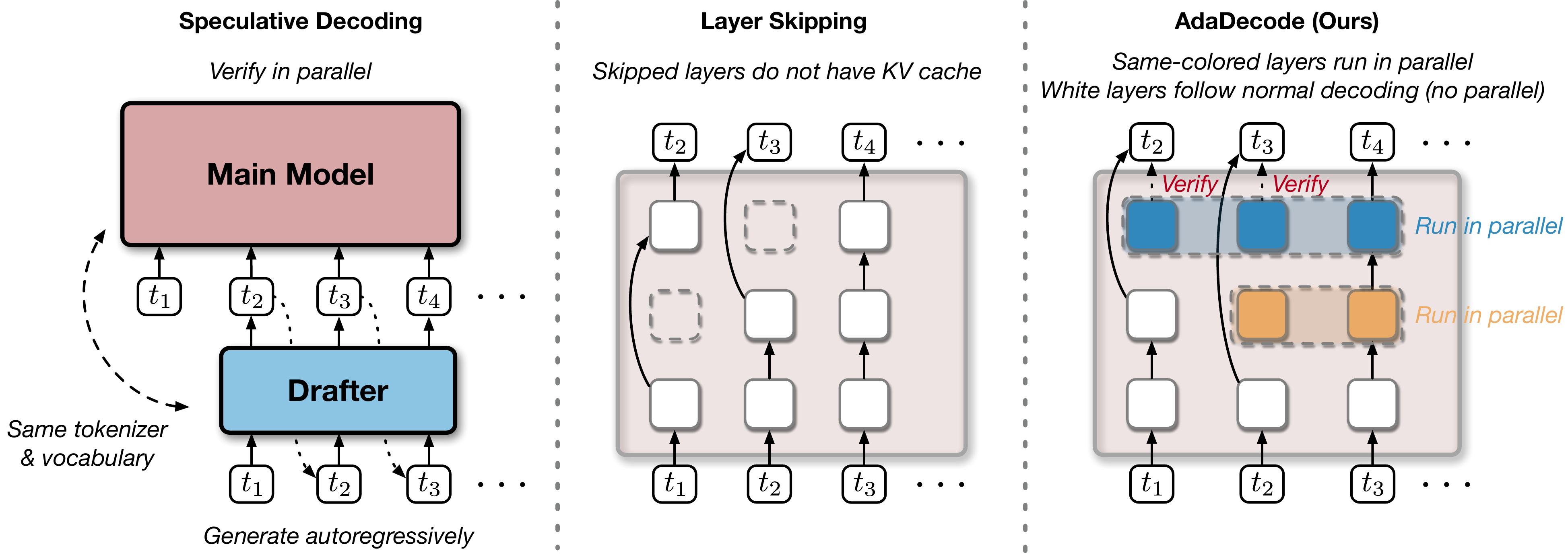}
\vspace{-2em}
\caption{(\textbf{Left}) Speculative decoding relies on an auxiliary drafter model, leading to increased memory usage and requiring the same tokenizer and vocabulary as the main model.
(\textbf{Middle}) Layer skipping bypasses certain layers, which results in missing KV cache at those layers and can introduce discrepancies in future token predictions.
(\textbf{Right}) \model (Ours) accelerates decoding by \textit{adaptively} predicting future tokens early based on confidence (\eg, $t_2$ and $t_3$ are predicted from different intermediate layers), enabling earlier progression to subsequent tokens.
When future token steps require KV caches from the skipped layers (due to early predictions), these missing computations are executed \textit{in parallel} with subsequent token processing (same-colored layers).
A final verification step is employed to ensure output consistency with standard autoregressive decoding.
}
\label{fig:overview_methods}
\vspace{-1em}
\end{figure*}

The autoregressive decoding process in large language models (LLMs) is increasingly becoming a critical efficiency bottleneck for text generation~\citep{khoshnoodi2024comprehensive}. 
As each token generation depends on previously generated ones, the inherently sequential nature of this process severely limits parallelization capabilities on modern hardware~\citep{miao2023towards}. 
This challenge is becoming more pressing due to two recent key trends. Firstly, LLMs continue to grow exponentially in size~\citep{kaplan2020scaling,Hoffmann2022TrainingCL,Bi2024DeepSeekLS}, with billions or even trillions of parameters~\citep{achiam2023gpt,zeng2023glmb,dubey2024llama,jiang2024mixtral, Anthropic2024claude}, resulting in substantially more time-consuming and resource-intensive computations at each step of token generation. 
Secondly, model-generated outputs are becoming progressively longer, driven by emerging applications such as long-form content creation~\citep{pham2024suri,bai2024longwriter} and long chain-of-thought (CoT) reasoning~\citep{openai2024reasoning,snell2024scaling,brown2024large,guan2025rstar, team2025kimi, guo2025deepseek}, which require massive inference steps~\citep{huang2025efficient}. The confluence of these factors leads to significant latency in text generation, amplifying the urgent need for more efficient decoding methods.

To accelerate autoregressive decoding, two primary approaches have emerged: speculative decoding and layer skipping, as illustrated in~\Cref{fig:overview_methods}.
Speculative decoding~\citep{leviathan2023fast, chen2023accelerating,liu2023online,miao2023specinfer,he2024rest,huang2024specdec,li2024eagle,li2024eagle2,du2024glide} employs a lightweight secondary model, called the ``drafter'', to generate candidate tokens at lower latency, which are then verified in parallel by the larger main model. 
However, the reliance on a separate drafter model increases memory overhead and can be impractical in many cases, since the drafter must share the same tokenizer and vocabulary as the main model to ensure token compatibility.
Layer skipping~\citep{huang2018multi,elbayad2020Depth,elhoushi2024layerskip,del2023skipdecode,raposo2024mixture,Geva2022TransformerFL,YomDin2023JumpTC}, in contrast, reduces computation cost by selectively bypassing certain layers during token generation. 
This approach often requires designing new model architectures and intricate training methods~\citep{elhoushi2024layerskip,raposo2024mixture}.
Although effective at reducing latency, layer skipping often leads to discrepancies in output quality compared to standard autoregressive decoding~\citep{schuster2022confident,liu2024kangaroo}.
Specifically, skipped layers do not compute the key-value (KV) cache, which is essential for maintaining consistency in the model's predictions of future tokens.
As a result, while both speculative decoding and layer skipping provide promising speedups, they come with trade-offs that pose challenges for their widespread adoption in practice.

\begin{table*}[!ht]
    \centering
    \small
    \renewcommand{\arraystretch}{1.2}
    \caption{Comparison of \model with relevant methods. \textbf{Drafting Strategy} represents how the draft tokens are generated;
    \textbf{Acceleration} refers to the direction of acceleration: horizontal acceleration speedup generation across decoding time steps, while vertical acceleration focuses on speedups within each decoding time step.
    \textbf{Early Exiting} indicates how the model makes early predictions (\eg, exiting at a fixed layer or dynamically exiting at different layers);
    \textbf{Verification} indicates whether a verification step is employed to correct the draft token;
    \textbf{Output Guarantee} represents whether the output of the method is guaranteed to be consistent with the standard autoregressive decoding---Medusa can violate the guarantee depending on its setup;
    \textbf{\# of Params} refers to the number of parameters that need to be trained for enabling the method, which can impact its efficiency and effectiveness. The analysis of trainable parameters is based on Llama3.1-8B-Instruct, which has a total of 32 layers with a hidden size of 4096. For standard speculative decoding (SpecDec), a separate draft model typically needs to be trained, and its number of trainable parameters may vary depending on the model architecture.}  
    \label{tab:comparison}
    \vspace{1em}
    \resizebox{\textwidth}{!}{
    \begin{tabular}{|l|l|c|c|c|c|c|}
        \hline
        \textbf{Method} & \textbf{Drafting Strategy} & 
        \textbf{Acceleration} & \textbf{Early Exiting} & \textbf{Verification}&  \textbf{Output Guarantee}  & \textbf{\# of Params} \\ 
        \hline
        FREE~\cite{bae2023fast} & Layer Skipping  & Vertical & Fixed Depth  & {\color{red!60!black}\xmark}& {\color{red!60!black}\xmark} & 8B \\ 
        LITE~\cite{varshney2023accelerating} & Layer Skipping & Vertical &  Dynamic Depth  & {\color{red!60!black}\xmark}  & {\color{red!60!black}\xmark} & 8B\\
        SpecDec~\cite{leviathan2023fast} &Autoregressive  & Standard & N/A  & {\color{green!60!black}\cmark}& {\color{green!60!black}\cmark} & draft model\\ 
        LayerSkip~\cite{elhoushi2024layerskip} & Layer Skipping  & Vertical & Fixed Depth & {\color{green!60!black}\cmark}& {\color{red!60!black}\xmark} & 8B \\ 
        EESD~\cite{liu2024speculative} & Layer Skipping & Vertical & Fixed Depth & {\color{green!60!black}\cmark}& {\color{green!60!black}\cmark} & 0.7B \\ 
        Medusa~\cite{cai2024medusa} & Multi-hop Heads & Horizontal  & N/A  & {\color{green!60!black}\cmark}& {\color{green!60!black}\cmark} \& {\color{red!60!black}\xmark} & 1.5B \\ 
        EAGLE~\cite{li2024eagle} & Autoregressive & Horizontal  & N/A  & {\color{green!60!black}\cmark}& {\color{green!60!black}\cmark}  & 0.3B \\ 
        \hline
        \textbf{\model (Ours)} & {Layer Skipping} & { Vertical} & {Dynamic Depth}  & {\color{green!60!black}\cmark}& {\color{green!60!black}\cmark} & \text{48M} \\ 
        \hline
    \end{tabular}
    }
    \vspace{-1em}
\end{table*}

In this work, we propose \model, a fast and accurate decoding method that accelerates autoregressive decoding through adaptive layer parallelism. 
\model builds on the insight that many simple and predictable tokens can be accurately generated at intermediate layers, without requiring a full pass through all model layers~\citep{schuster2022confident,del2023skipdecode}. 
To optimize token generation quality at these layers, we introduce lightweight language model (LM) heads at intermediate layers, trained to minimize the KL divergence between their predictions and those of the final layer, while keeping the original model parameters frozen.
Our preliminary studies (\Cref{fig:heatmap_ours}) show that when predictions at intermediate layers are sufficiently confident, subsequent layers are unlikely to significantly alter the output. 
Based on this observation, \model predicts tokens using the hidden state at intermediate layers when confidence is high, and immediately initiates processing of the next token.
The remaining layers' KV cache computations are deferred and performed in parallel at future decoding steps.
This parallelism mitigates the sequential bottleneck of vanilla autoregressive decoding by processing multiple tokens simultaneously, thus maximizing hardware utilization and significantly boosting overall decoding throughput.
Once all KV cache computations are complete, the standard autoregressive decoding result is obtained, allowing us to verify the correctness of early predictions.
Compared to speculative decoding and layer skipping, \model accelerates autoregressive decoding while maintaining output parity, without requiring auxiliary models or modifications to the original model parameters.

Our contributions are as follows: 
(1) We propose a lightweight intermediate-layer LM head training approach that optimizes token prediction at early layers, enabling high-confidence early predictions, with the original model parameters unchanged. 
Our lightweight LM head achieves performance comparable to fully parameterized LM heads from prior works~\citep{stern2018blockwise, cai2024medusa}, while using 31$\times$ fewer parameters; 
(2) We introduce adaptive layer parallelism that concurrently processes multiple early-predicted tokens generated by the lightweight LM heads from different layers, significantly improving hardware utilization and decoding speed; 
and (3) \model demonstrates superior decoding throughput across various text generation tasks with up to {\bf 1.73$\times$} speedup, while ensuring identical outputs to standard autoregressive decoding.

\section{Method: \model}\label{sec:method}

In this section, we present our proposed method \model, as illustrated in \Cref{fig:token_process}.
The core concept is to start processing the initial layers of subsequent tokens early in an adaptive manner, while completing the remaining layers of all early-predicted tokens in parallel, thereby enhancing overall decoding throughput via parallel computation. 
We introduce the techniques for enabling early predictions using intermediate layer representations in \Cref{sec:early_exit_training}, followed by a detailed explanation of our adaptive layer parallelism processing approach in \Cref{sec:parallel}.
Note that in this work, we focus on vertical acceleration within each decoding step, which is orthogonal to methods that target horizontal acceleration across decoding time steps. Moreover, our approach preserves the exact output of standard autoregressive decoding, with consistency guaranteed by a verification step. However, not all acceleration methods that incorporate verification can ensure such output parity. A comprehensive comparison with relevant methods is provided in Table~\ref{tab:comparison}.

\subsection{Lightweight LM Heads Enable Early Predictions}\label{sec:early_exit_training}

\noindent{\bf Off-the-shelf LMs struggle with early predictions.} Many tokens in natural language, such as stopwords and common phrase completions, can be easily predicted and do not require the full capacity of a model for accurate generation. 
However, off-the-shelf LMs typically have difficulty utilizing intermediate layers for next-token prediction, as the final-layer LM head is not trained to work with intermediate-layer representations. 
As shown in \Cref{fig:heatmap_vanilla}, applying the final-layer LM head to intermediate layers results in mostly low predicted probabilities for the tokens generated by the model, making early predictions with standard LMs challenging. 
Hence, prior research on early exiting often involves designing specific model architectures or fine-tuning existing models to enable intermediate-layer predictions~\citep{elhoushi2024layerskip,YomDin2023JumpTC,del2023skipdecode}. 
As a result, these acceleration methods typically fail to produce outputs consistent with those of the original off-the-shelf LMs due to changes in architecture and parameters.

\begin{figure*}[!t]
\includegraphics[width=\textwidth]{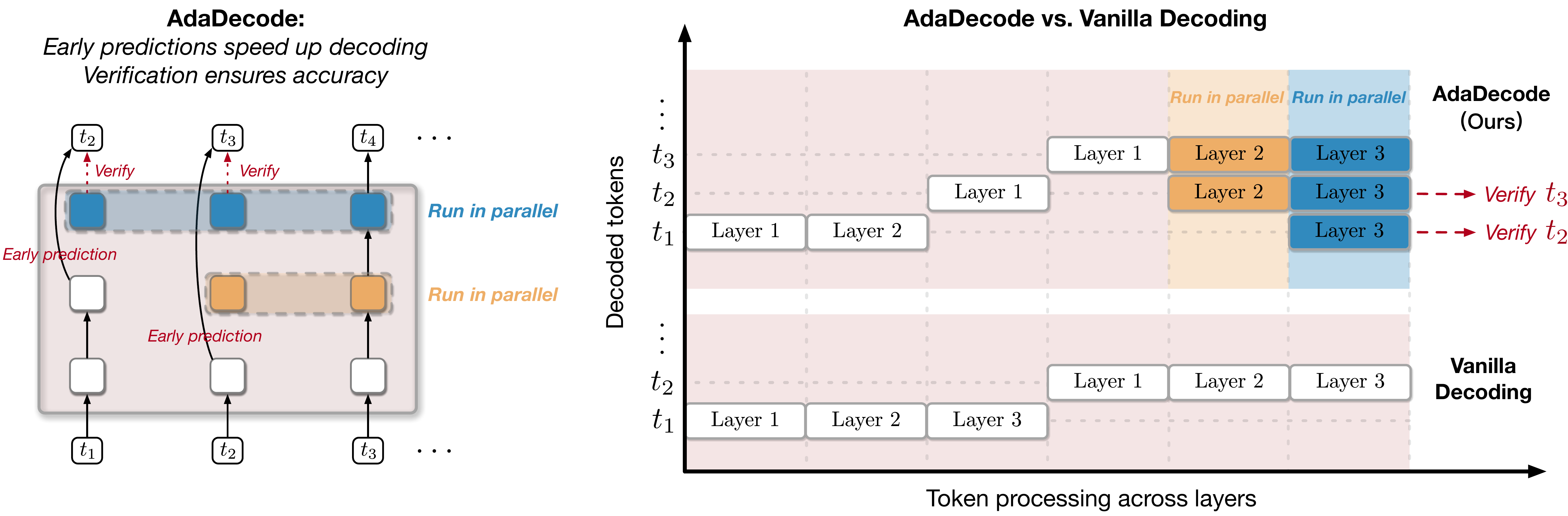}
\caption{
({\bf Left}): 
A simplified example demonstrating how early predictions enable parallelization. Here, $t_1$ triggers the early prediction of $t_2$ at Layer 2, and $t_2$ triggers the early prediction of $t_3$ at Layer 1. As a result, the Layer 2 computations of $t_2$ and $t_3$ can run in parallel, followed by parallelized Layer 3 computations for $t_1$, $t_2$, and $t_3$.
({\bf Right}):
Vanilla autoregressive decoding processes tokens strictly in sequence, limiting opportunities for parallelization. In contrast, \model adaptively starts processing the first few layers of the next token once the model confidently makes early predictions using intermediate LM heads.
The remaining layers for all early-predicted tokens are then computed in parallel, accelerating the decoding process.
}
\vspace{1em}
\label{fig:token_process}
\end{figure*}

\begin{figure*}[!t]
\centering
\begin{subfigure}[b]{0.35\textwidth}
    \includegraphics[width=\textwidth]{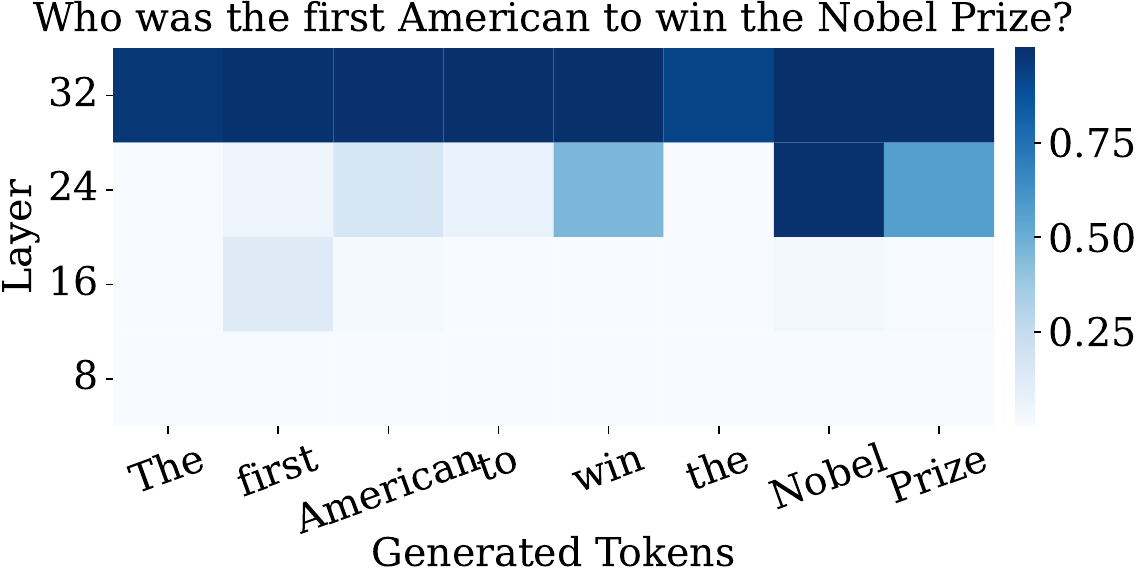}
    \caption{w/ original LM head.\label{fig:heatmap_vanilla}}
\end{subfigure}
\hfill
\begin{subfigure}[b]{0.35\textwidth}
    \includegraphics[width=\textwidth]{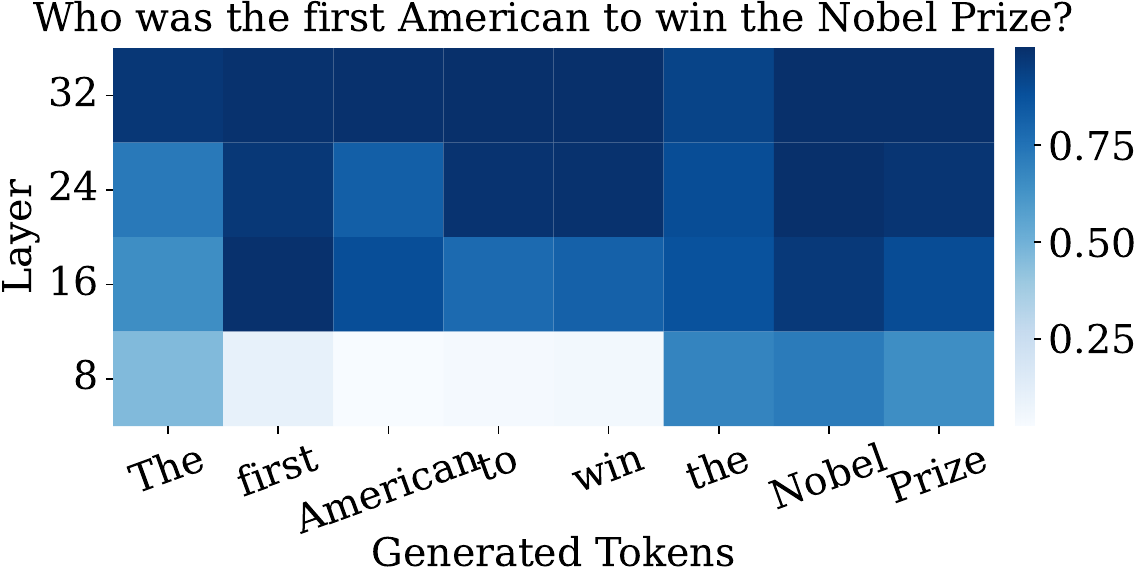}
    \caption{w/ trained LM heads.\label{fig:heatmap_ours}}
\end{subfigure}
\hfill
\begin{subfigure}[b]{0.28\textwidth}
    \includegraphics[width=\textwidth]{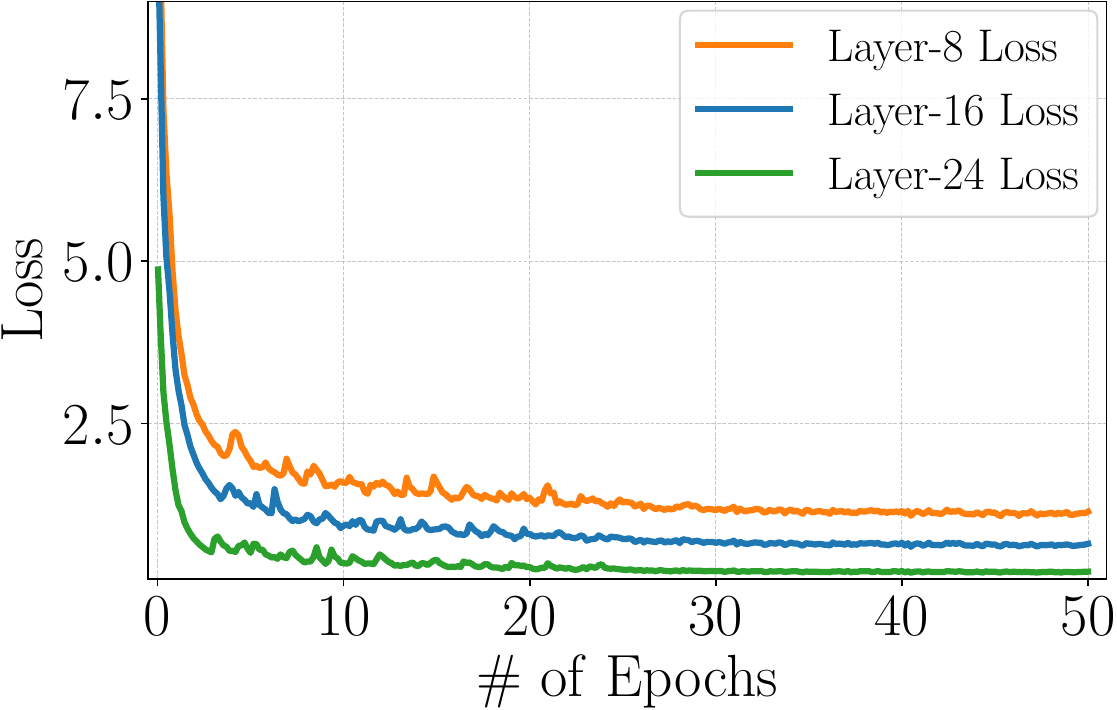}
    \caption{KL loss curve.\label{fig:loss_curve}}
\end{subfigure}
\caption{Probabilities of model-generated tokens predicted at the 8th, 16th, 24th, and 32nd (final) layers of the fine-tuned Llama-3.1-8B-Instruct are shown using (a) the original last-layer language model (LM) head and (b) our newly introduced lightweight LM heads. 
These new LM heads are trained to minimize the KL divergence loss relative to the final layer predictions, while keeping all original model parameters frozen. 
The LM heads enable close approximation of the final predictions, as seen in (b), where many tokens have confident predictions at intermediate layers, and in (c), where the KL divergence loss relative to the final layer predictions is minimal.
}
\vspace{-.5em}
\label{fig:heatmap}
\end{figure*}

\noindent{\bf Training intermediate-layer LM heads with original model parameters frozen.}
We hypothesize that many intermediate-layer representations may already contain sufficient information for predicting the next token, but the original LM head cannot directly harness their potential.
To facilitate early predictions using intermediate-layer representations without further fine-tuning them, we introduce trainable LM heads $\bs{\theta}^{(i)} = \{ \bs{e}_t^{(i)} \}_{t \in \mathcal{V}}$ at each candidate early prediction layer $l^{(i)}$.
They take the hidden representations $\bs{h}^{(i)}$ at layer $l^{(i)}$ as \textit{frozen} features and predict the next word distribution $p_{\bs{\theta}^{(i)}}(t|\bs{h}^{(i)})$, which are trained to approximate the last-layer prediction $p^*(t|\bs{h}^{*})$ ($\bs{h}^{*}$ is the last-layer hidden states) by minimizing the following KL divergence loss:
\begin{equation*}
\begin{split}
p_{\bs{\theta}^{(i)}}(t|\bs{h}^{(i)}) &= 
\frac{\exp\left({\bs{e}_t^{(i)}} \cdot \bs{h}^{(i)} \right)}
{\sum_{t'\in \mathcal{V}}\exp\left( {\bs{e}^{(i)}_{t'}} \cdot \bs{h}^{(i)} \right)},  \\  
\mathcal{L}(\bs{\theta}^{(i)}) &= 
\text{KL}\left(p^*(t|\bs{h}^{*}) \big\| p_{\bs{\theta}^{(i)}}(t|\bs{h}^{(i)}) \right).
\end{split}
\end{equation*}
Since we do not update $\bs{h}^{(i)}$, the original model parameters remain unchanged, and only the newly added LM heads are trained.
As demonstrated in \Cref{fig:loss_curve}, training these intermediate-layer LM heads results in good approximations of the last-layer outputs, as indicated by the low KL divergence loss at the end of training.
This supports our hypothesis that intermediate-layer representations contain ample information for predicting the next token, and simple transformations via new LM heads can effectively extract this information.
Consequently, with our trained LM heads, many tokens' predicted probabilities are notably high at intermediate layers, as shown in \Cref{fig:heatmap_ours}.

\noindent{\bf Lightweight LM head implementation.}
The LM heads $\bs{\theta}^{(i)} = \{ \bs{e}_t^{(i)} \}_{t \in \mathcal{V}}$ are typically represented by a weight matrix $\bs{E}^{(i)} \in \mathbb{R}^{|\mathcal{V}| \times d}$ where $d$ is the model dimension size~\citep{stern2018blockwise, cai2024medusa}.
Given the large vocabulary size of LMs, learning a separate LM head for each early prediction layer leads to a substantial increase in the number of parameters.
Based on the observation that the weight matrix $\bs{E}^{(i)}$ is always applied to the hidden states $\bs{h}^{(i)}$ to compute the probability over the vocabulary
$p_{\bs{\theta}^{(i)}} = \text{Softmax}(\bs{h}^{(i)} \bs{E}^{(i)\top})$,
to reduce the parameter cost, we decompose it as $\bs{E}^{(i)} = \bs{E}^{*} \bs{T}^{(i)}$ where $\bs{E}^{*}$ is the last-layer LM head weights, and $\bs{T}^{(i)} \in \mathbb{R}^{d \times d}$ is a learnable transformation matrix.
As $d \ll |\mathcal{V}|$ for most LLMs, learning $\bs{T}^{(i)}$ is much more parameter-efficient than learning $\bs{E}^{(i)}$ directly. 
We defer the proof that learning $\bs{T}^{(i)}$ retains the full expressiveness of learning $\bs{E}^{(i)}$ to \Cref{app:proof}.
In practice, in an Llama3.1-8B-Instruct model~\citep{dubey2024llama}, each lightweight LM head $\bs{T}^{(i)}$ introduces only 16M parameters ($d = \,$4096), while a full-parameterized LM head $\bs{E}^{(i)}$ would require 0.5B parameters ($|\mathcal{V}| = \,$128K). 
To further reduce the number of trainable parameters, low-rank adaptation methods~\citep{Hu2021LoRALA} can be explored, which we leave for future work.

\RestyleAlgo{ruled}
\SetKwInput{KwInput}{Input}
\SetKwInput{KwParameter}{Parameter}
\SetKwInput{KwOutput}{Output}
\newcommand\mycommfont[1]{\footnotesize\textcolor{blue}{#1}}
\SetCommentSty{mycommfont}
\SetAlgoNoEnd
\begin{algorithm*}[ht]
\caption{LLM Decoding Acceleration via Adaptive Layer Parallelism ({\bf \model})}
\label{alg:method}
\DontPrintSemicolon
\KwInput{User prompt $\bs{x} = [x_1, x_2, \dots, x_M]$
}
\KwParameter{Early prediction threshold $\gamma$}
\KwOutput{Generated output sequence $\bs{y} = [t_0, t_1, t_2, \dots, t_N]$}

KV cache $\gets \text{LM}(\bs{x})$ \tcp*{initialize KV cache by processing user prompt}

$i \gets 0;\; t_i \gets \text{[BOS]};\; \bs{y} \gets [ t_i ]$  \tcp*{initialize output sequence with [BOS] token}

$\mathcal{P} \gets \{ \,\,\};\, \mathcal{P}[l] \gets [ \,\,],\, \forall\; 1\le l \le L$ \tcp*{$\mathcal{P}$ tracks the list of tokens to be processed in parallel at each layer $l$} 

\While {$t_i \neq \textnormal{[EOS]}$ \tcp*{terminate generation upon generating EOS token}} 
{    
    \For {$1\le l \le L$ \tcp*{process $t_i$ from the first to the last layer}} 
    {
        $\mathcal{P}[l] \gets \mathcal{P}[l] \oplus t_{i}$ \tcp*{add current token to the parallel processing list at layer $l$}
        {
            update KV cache at layer $l$ by processing all tokens in $\mathcal{P}[l]$ in parallel\;
        }
        
        $t_{i+1} \sim p_{\bs{\theta}^{(l)}}(t'|\bs{h}^{(l)})$ \tcp*{sample from the intermediate LM head at layer $l$}
        \If {$p_{\bs{\theta}^{(l)}}(t_{i+1}|\bs{h}^{(l)}) > \gamma$ \tcp*{if the probability surpasses the threshold}} 
        { $\bs{y} \gets \bs{y} \oplus t_{i+1}$  \tcp*{append $t_{i+1}$ to output sequence}
        $\mathcal{P}[l'] \gets \mathcal{P}[l'] \oplus t_{i},\, \forall\, l < l' \le L$ \tcp*{add $t_{i}$ to the parallel processing list of all deeper layers}
        \textbf{break} \tcp*{immediately start processing the next token}
        }
        \If {$l = L$ \tcp*{if the final layer is reached} }
        {
            $\forall\, t \in \mathcal{P}[l],\,$ accept or reject $t$ based on Eq.~\eqref{eq:rej_sampling} \tcp*{verify all tokens in $\mathcal{P}[l]$ in parallel}
            $\mathcal{P}[l] \gets [\,\,]$ \tcp*{empty $\mathcal{P}[l]$}
            $\bs{y} \gets \bs{y} \oplus t_{i+1}$ \tcp*{remove rejected tokens, append $t_{i+1}$ to output sequence, and resume from the correct token}
        } 
    }
    
    $i \gets i+1$\;
}
\Return $\bs{y}$\;
\end{algorithm*}

\subsection{Adaptive Layer Parallelism}\label{sec:parallel}

\paragraph{Early predictions trigger parallel processing.} 
As shown in \Cref{fig:heatmap_ours}, when a token's predicted probability is sufficiently high with intermediate LM heads (introduced in \Cref{sec:early_exit_training}), subsequent layers are unlikely to change the predictions significantly.
Based on this observation, we generate the next token $t$ at layer $l^{(i)}$ when its probability surpasses a predefined threshold: 
\begin{equation}
\label{eq:early_pred}
t \sim p_{\bs{\theta}^{(i)}}(t'|\bs{h}^{(i)}) \quad \text{and} \quad p_{\bs{\theta}^{(i)}}(t|\bs{h}^{(i)}) > \gamma, 
\end{equation}
where $\gamma$ is a hyperparameter, and any sampling strategy can be employed (\eg, greedy or nucleus sampling~\citep{Holtzman2019TheCC}).
This early prediction can adaptively happen at different layers (\ie, any layer with an intermediate LM head) once the probability surpasses $\gamma$, allowing flexible parallel calculation of KV caches across multiple tokens.
Notably, it is necessary to finish processing the subsequent layers of the early predicted token to obtain their KV cache---omitting this step would result in a missing KV cache at deeper layers, which would cause inconsistencies when computing future token representations.

\noindent{\bf Early prediction verification.}
Regardless of the threshold hyperparameter $\gamma$ set in \Cref{eq:early_pred}, there is always a possibility that the early predicted token from intermediate layers differs from the final prediction.
To ensure consistency with standard autoregressive decoding, we 
introduce a verification step that first completes the KV cache of the remaining layers for all early-predicted tokens in parallel, and then verifies the early predictions using a modified rejection sampling scheme~\citep{chen2023accelerating, leviathan2023fast}.
Specifically, we accept the early predicted token $t$, sampled from the intermediate layer's distribution $p_{\bs{\theta}^{(i)}}(t'|\bs{h}^{(i)})$, with probability: 
\begin{equation}
\label{eq:rej_sampling}
\min\left\{1,\,\, \frac{p^*(t'|\bs{h}^{*})}{p_{\bs{\theta}^{(i)}}(t'|\bs{h}^{(i)})}\right\}.
\end{equation}
If $t$ is accepted, we move on to the verification of the next early-predicted token, until all early predictions have been accepted or a token gets rejected.
When an early prediction $t$ is rejected, we resample a token $t^*$ from the adjusted distribution $\text{Normalize}(\max(0, p^*(t'|\bs{h}^{*}) - p_{\bs{\theta}^{(i)}}(t'|\bs{h}^{(i)})))$ as a replacement, remove the KV caches for tokens after $t$, and resume generation from $t^*$ onward.

Although rejecting early predictions can lead to some wasted computation, we observe in practice that this happens rarely. 
For example, as indicated in~\Cref{fig:verification}, with $\gamma = 0.85$, only about 6\% of all early predictions are rejected, which supports our observation that high-confidence predictions from intermediate layers tend to align with the final predictions.

\textbf{Overall algorithm} of \model is presented in Algorithm~\ref{alg:method}.

\begin{table*}[!t]
\centering
\caption{Throughput (average tokens per second) and speedup comparison between \model and baseline decoding methods across three tasks, including text summarization, code generation, and mathematical reasoning. All compared methods ensure generation parity with vanilla autoregressive decoding, and the speedup results are computed relative to the benchmarks established on the vanilla setup (\ie, speedup = $1\times$ for vanilla autoregressive decoding). The best performance is highlighted in {\bf bold}\label{tab:main_result}.} 
\vspace{.5em}
\resizebox{2.05\columnwidth}{!}{
\begin{tabular}{lcccccc}
\toprule
\multirow{4}{*}{Method} & \multicolumn{2}{c}{Text Summarization} & \multicolumn{2}{c}{Code Generation} & \multicolumn{2}{c}{Mathematical Reasoning} \\ 
& \multicolumn{2}{c}{(XSum)} & \multicolumn{2}{c}{(HumanEval)} & \multicolumn{2}{c}{(GSM8K)}  \\ 

\cmidrule(lr){2-3} \cmidrule(lr){4-5} \cmidrule(lr){6-7}
& Throughput & Speedup & Throughput  & Speedup & Throughput & Speedup \\ 
& (Tokens/s) & (vs. Vanilla) & (Tokens/s) & (vs. Vanilla) & (Tokens/s) & (vs. Vanilla) \\  

\midrule

\multicolumn{7}{l}{\texttt{Llama3.1-8B$_{\textsc{inst}}$}} \\

\quad Vanilla Decoding
& 33.31 & 1.00$\times$ & 32.58 & 1.00$\times$ & 33.13 & 1.00$\times$\\
\quad SpecDecode~\citep{leviathan2023fast}\\
\quad \quad {w/ drafter \texttt{Llama3.2-1B$_{\textsc{inst}}$}}
& 35.64 & 1.07$\times$ & 46.26 & 1.42$\times$ & 45.38 & 1.37$\times$ \\
\rowcolor{gray!20} 
\quad \model (Ours)
& 38.09 & {\bf 1.14$\times$} & 49.21 & {\bf 1.51$\times$} & 49.17 & {\bf 1.48$\times$}    \\
\midrule

\multicolumn{7}{l}{\texttt{CodeLlama-13B$_{\textsc{inst}}$}} \\
\quad Vanilla Decoding & 27.80 & 1.00$\times$ & 27.55 & 1.00$\times$ & 28.25 & 1.00$\times$\\
\quad SpecDecode~\citep{leviathan2023fast}\\
\quad\quad {w/ drafter \texttt{CodeLlama-7B$_{\textsc{inst}}$}}
& 26.97 & 0.97$\times$ & 29.75 & 1.08$\times$ & 28.53 & 1.01$\times$ \\
\quad Self-SpecDecode~\citep{zhang2024ssd}
& 28.63 & 1.03$\times$ & 31.40 & 1.14$\times$ & 31.64 & 1.12$\times$ \\
\quad LookAhead~\citep{fu2024break}
& 33.08 & 1.19$\times$ & 41.04 & 1.49$\times$ & 38.42 & 1.36$\times$ \\
\quad SWIFT~\citep{xia2025swift}
& 30.02 & 1.08$\times$ & 36.64 & 1.33$\times$ & 30.51 & 1.08$\times$ \\
\rowcolor{gray!20} 
\quad \model (Ours)
& 37.99 & {\bf 1.37$\times$} & 46.78 & {\bf 1.69$\times$} & 44.28 & {\bf 1.57$\times$}    \\
\midrule

\multicolumn{7}{l}{\texttt{CodeLlama-34B$_{\textsc{inst}}$}} \\
\quad Vanilla Decoding & 17.68 & 1.00$\times$ & 18.91 & 1.00$\times$ & 19.16 & 1.00$\times$\\
\quad SpecDecode~\citep{leviathan2023fast}\\
\quad\quad {w/ drafter \texttt{CodeLlama-7B$_{\textsc{inst}}$}}
& 19.09 & 1.08$\times$ & 26.66 & 1.41$\times$ &24.14 & 1.26$\times$   \\
\quad\quad {w/ drafter \texttt{CodeLlama-13B$_{\textsc{inst}}$}} 
& 20.86 & 1.18$\times$ & 23.25 & 1.23$\times$ & 21.07 & 1.10$\times$  \\
\quad Self-SpecDecode~\citep{zhang2024ssd}
& 18.97 & 1.07$\times$ & 21.55 & 1.14$\times$ & 21.84 & 1.14$\times$ \\
\quad LookAhead~\citep{fu2024break}
& 20.15 & 1.14$\times$ & 26.28 & 1.39$\times$ & 27.01 & 1.41$\times$ \\
\quad SWIFT~\citep{xia2025swift}
& 21.92 & 1.24$\times$ & 26.47 & 1.40$\times$ & 25.29 & 1.32$\times$ \\

\rowcolor{gray!20} 
\quad \model (Ours)
& 24.35 & {\bf 1.38$\times$} & 32.78 & {\bf 1.73$\times$} & 30.68 & {\bf 1.60$\times$}    \\
\bottomrule
\end{tabular}
}
\end{table*}

\section{Experimental Setups}
\label{sec:exps}

\noindent{\bf Backbone models and evaluation tasks.} We evaluate our method on a diverse set of text generation tasks, including text summarization (\ie, XSum~\citep{narayan2018don}), code generation (\ie, HumanEval~\citep{chen2021evaluating}), and mathematical reasoning (\ie, GSM8K~\citep{cobbe2021training}), covering a broad spectrum of language model capabilities.
We employ three instruction-tuned models as the backbone, ranging from 8B to 34B parameters (\ie, LLama3.1-8B-Instruct, CodeLlama-13B-Instruct, and CodeLlama-34B-Instruct), with the default chat template for decoding.
Due to the page limit, please refer to~\Cref{app:dataset} for a detailed introduction of the benchmarks.

\noindent{\bf Baselines.}
In our work, we primarily focus on comparing with efficient decoding baselines that provide output parity guarantees with standard autoregressive decoding techniques, including speculative decoding~\citep{leviathan2023fast, chen2023accelerating}, LookAhead~\citep{fu2024break}, self-speculative decoding~\citep{zhang2024ssd} and its variant SWIFT~\citep{xia2025swift}.
Despite their conceptual advantages, these methods have practical limitations. They often come with inherent constraints regarding model selection or necessitate task-specific model architectures.
Such requirements significantly compromise their practical utility and present difficulties for real-world application---as we will demonstrate later (\S~\ref{sec:analysis}), they can only lead to quite limited speedup or even negative speedup results compared to standard decoding unless careful hyperparameter tuning is performed.
A more detailed introduction of baselines and implementation details is provided in~\Cref{app:baselines,app:impl}.

\begin{table*}[!t]
\centering
\setlength{\tabcolsep}{10pt}
\caption{Ablation study on \model. We report the performance and required additional parameters of our method by analyzing the impact of verification, adaptive layer prediction, and ablating the lightweight LM head. 
We report the speedup and consistency ratio (measured by string match between generation results) compared to vanilla autoregressive decoding of Llama3.1-8B-Instruct on the code generation task (HumanEval).}
\vspace{1em}
\label{tab:ablation}
\begin{tabular}{lcccc}
\toprule
Method  &\# New Heads &\# New Params & Consistency Ratio & Speedup \\ 
\midrule
\model & 3 & 48M &  0.996 & 1.51$\times$        \\
\quad w/o verification
&3 & 48M & 0.652  & 1.64$\times$ \\
\quad w/ fixed-layer early prediction
&1 & 16M & 0.996 & 1.37$\times$ \\
\midrule
\quad w/ original LM head 
&0 & 0M & 0.995 &0.84$\times$\\
\quad w/ mixed-domain LM head
&3 & 48M & 0.998  & 1.29$\times$ \\
\quad w/ full-parameterized LM head
&3 & 1.5B & 0.997  & 1.49$\times$ \\
\bottomrule
\end{tabular}
\end{table*}

\section{Results}

\subsection{Main Results}

\noindent{\bf \model consistently achieves superior inference speedup across all benchmarks.} 
To validate the effectiveness of our method, we compare the proposed~\model with state-of-the-art efficient decoding methods across a wide range of challenging text generation tasks.
As presented in~\Cref{tab:main_result}, our method consistently delivers superior speedup compared to all baseline approaches regardless of the backbone model size, achieving up to $1.73\times$ speedup compared to standard autoregressive decoding.

\noindent{\bf Speculative decoding (mostly) performs better when assisted with smaller models}.~\Cref{tab:main_result} shows that a smaller drafter model (\eg, CodeLlama-7B-Instruct) generally leads to higher speedup for SpecDecode compared to a moderate-size model (\eg, CodeLlama-13B-Instruct). This is mainly because smaller drafters have fewer parameters, requiring less computation and inference time to generate draft tokens, which are then passed to a large-scale verifier (\eg, CodeLlama-34B) for parallel verification and correction, significantly reducing the overall time.
One notable exception is in the text summarization task, where using the smaller model as the drafter yields a lower speedup than the moderate counterpart. We speculate that this is because smaller models like CodeLlama-7B-Instruct have limited capacity to handle extremely long texts, thus producing low-quality draft tokens, which can lead to a higher rejection ratio, thereby increasing the overall latency.

\begin{figure}[!t]
\includegraphics[width=0.45\textwidth]{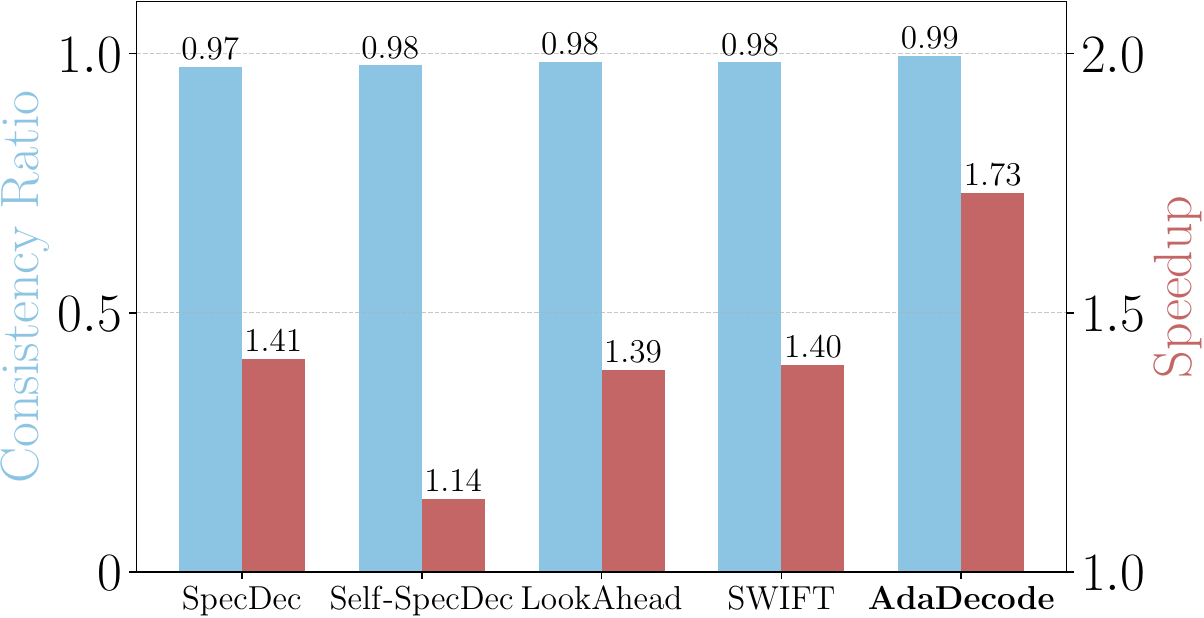} 
\caption{Consistency ratios and speedups of different methods vs. vanilla decoding on HumanEval with CodeLlama-34B-Instruct.
}
\label{fig:consistency}
\vspace{-1em}
\end{figure}

\noindent{\bf Self-SpecDecode and its variant SWIFT tend to achieve higher speedups as the model size increases}. By skipping a larger portion of intermediate layers, such methods can generate draft tokens much faster than the full model, thereby achieving significant speedup.
However, this approach requires task-specific model architecture configuration to achieve meaningful speedups.
This is because skipping too many layers negatively impacts the quality of the draft tokens, which will lead to a high rejection rate during verification, and consequently increase the decoding latency. On the contrary, skipping too few layers does not yield sufficient speedup, as the performance gains stem primarily from reducing the latency of skipped layers.

\noindent{\bf LookAhead requires surplus FLOPs to achieve meaningful speedups on larger models.} 
Unlike standard speculative decoding methods, LookAhead achieves higher speedup on the 13B model than on the 34B model. This is because LookAhead employs Jacobi decoding, which generates multiple disjoint n-grams in parallel within a single step. While this approach reduces the number of decoding steps, it comes at the cost of increased FLOPs. In situations where hardware FLOPs become the bottleneck, especially with large models like the 34B, the surplus FLOPs needed for parallel n-gram generation may not be satisfied, thus leading to diminished speedup.

\begin{figure*}[!t]
\centering
\begin{subfigure}[b]{0.3\textwidth}
    \includegraphics[width=\textwidth]{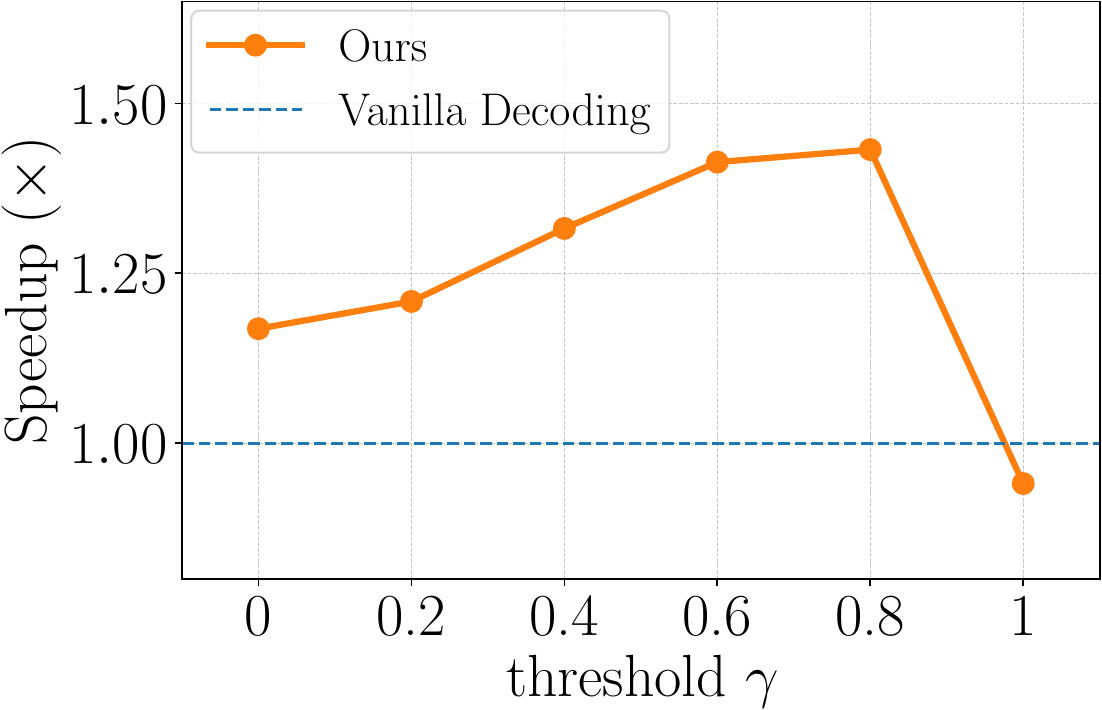}
    \caption{Inference speedup.\label{fig:inferemce_speed}}
\end{subfigure}
\hfill
\begin{subfigure}[b]{0.3\textwidth}
    \includegraphics[width=\textwidth]{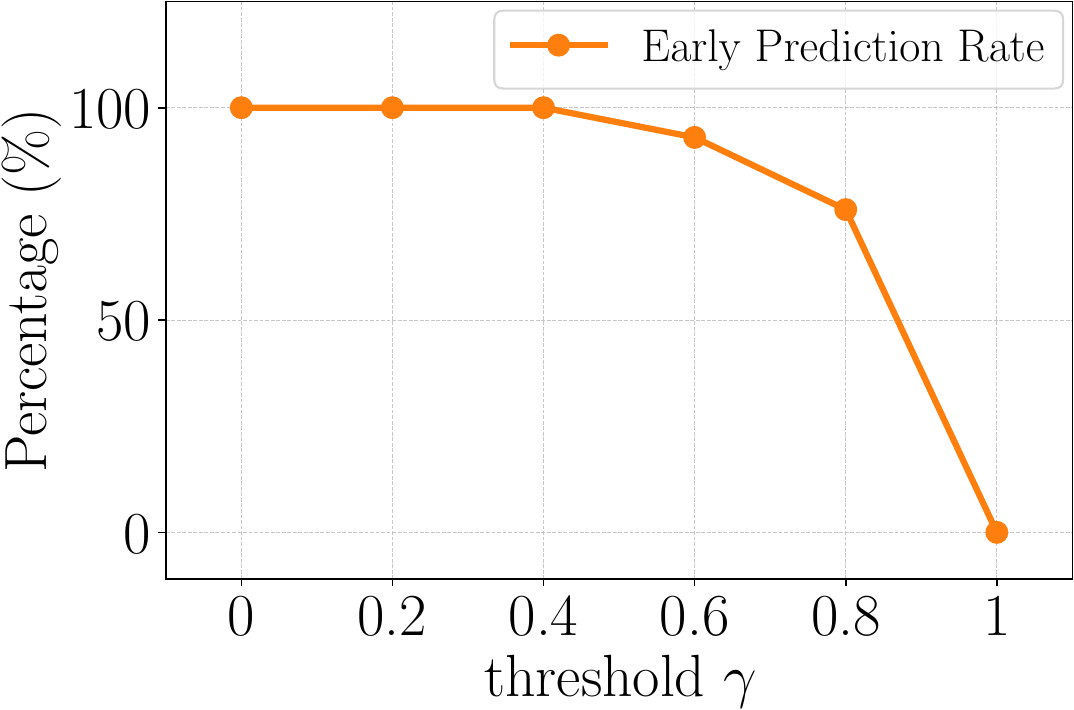}
    \caption{Early prediction rate.\label{fig:layer_skip}}
\end{subfigure}
\hfill
\begin{subfigure}[b]{0.3\textwidth}
    \includegraphics[width=\textwidth]{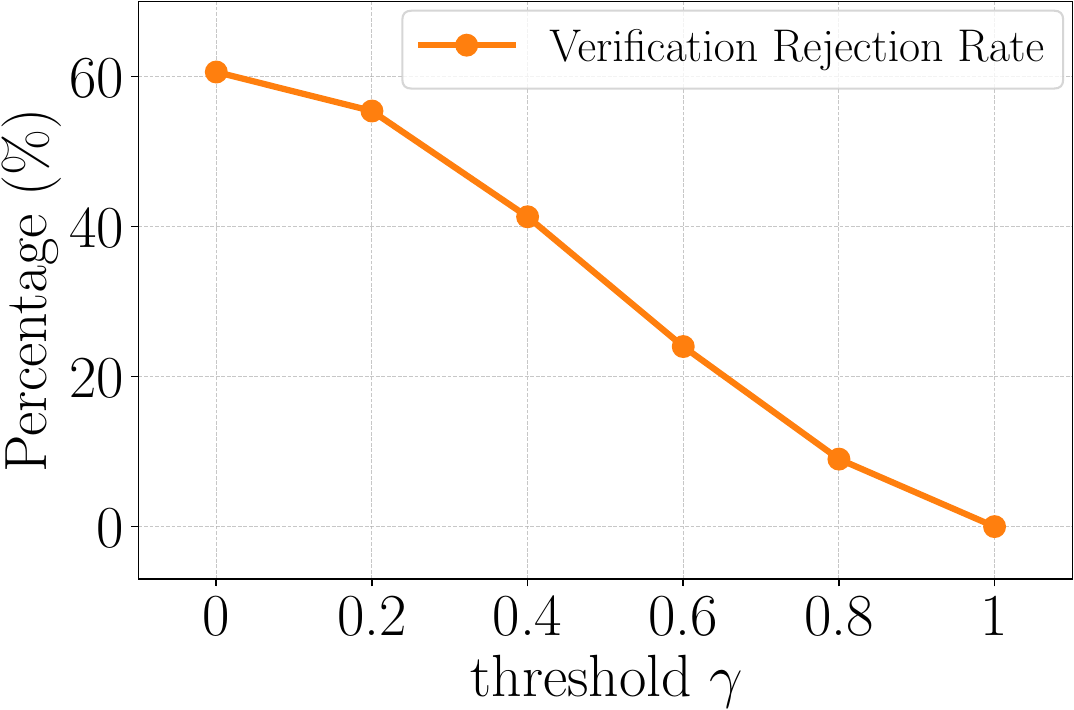}
    \caption{Verification rejection rate.\label{fig:verification}}
\end{subfigure}
\vspace{-1em}
\caption{Hyperparamter study of \model by varying the early prediction threshold $\gamma$. These figures present the evaluation results on HumanEval, including (a) the speedup curve, (b) the early prediction rate, and (c) the verification rejection rate.
}
\label{fig:hyper_search}
\vspace{-1em}
\end{figure*}

\subsection{Output Parity with Vanilla Decoding}

\noindent{\bf \model guarantees output parity with standard autoregressive decoding.} In principle, all compared methods in our work are designed to guarantee output consistency with standard autoregressive decoding as a result of the verification steps. 
However, due to numerical precision inaccuracies and potentially tied probabilities during the computation, the generation results might vary in practice and are subject to experimental environment and hardware specifications.\footnote{\url{https://github.com/huggingface/transformers/issues/30413}} 
As shown in~\Cref{fig:consistency}, we empirically evaluate the output consistency ratio of all baseline methods against the vanilla decoding and report their corresponding speedup ratios on the HumanEval benchmark with CodeLlama-34B-Instruct as the backbone. Despite slightly deviating from theoretical guarantees, all of these methods empirically achieve nearly 100\% consistency ratio against the standard autoregressive decoding.
One potential reason is the numerical precision inaccuracies during inference, as we use FP16 precision for inference, which may cause the model to select slightly different tokens compared to standard decoding.

\subsection{Ablation Study}
As shown in~\Cref{tab:ablation}, we conduct a comprehensive ablation study to understand the effectiveness of each module in our proposed \model framework.

\noindent{\bf The verification step is essential for ensuring output consistency.} The first row shows that removing the verification step from \model significantly reduces the consistency ratio, even though it achieves a slightly higher speedup. Without verification, there is no control over the generated tokens, which can lead to \model producing entirely different content as early predictions may not always be reliable. This results in the acceptance of potentially incorrect early predicted tokens and causes significant deviations from standard autoregressive decoding. This finding underscores the importance of the verification step in our method.

\noindent{\bf Adaptive early prediction allows for more flexible layer parallelism.} With fixed-layer early prediction, we only need one intermediate LM head to make early predictions at a fixed layer. While this requires less memory overhead, it negatively impacts the speedup compared to the full-fledged \model that allows early prediction at different layers, demonstrating the importance of adaptive early prediction.

\noindent{\bf Off-the-shelf original LM head for early prediction will not lead to speedup.} In this study, we replaced the trained lightweight LM head with the original final-layer LM head for early prediction. Although no additional parameters are required, it substantially compromises speedup because applying the original LM head to intermediate layers for early prediction only produces very low confidence (as shown in~\Cref{fig:heatmap_vanilla}), thus slowing down the decoding process.

\noindent{\bf Specialist LM head is better than generalist LM head.} By default, the intermediate lighter LM heads are trained for specific tasks using the training split in each benchmark. To test their effectiveness, we also train the LM heads on a mixed set of training data from all benchmarks, covering multiple domains. However, the ablation study shows that this significantly slows down the generation speed, suggesting that \model works more efficiently when the intermediate heads are trained in-domain.

\noindent{\bf Our lightweight LM head is sufficient for early prediction compared to the full-parameterized LM head.} We also implement a variant of our method by training full-parameterized intermediate LM heads for early prediction. Despite significantly increasing the number of hyperparameters, its speedup does not gain further boost compared to our lightweight head. This echoes our hypothesis that intermediate-layer representations contain enough information for predicting the next token, and simple transformations via lightweight LM heads can effectively extract it.

\subsection{Hyperparameter Sensitivity Study}\label{sec:analysis}

Despite the encouraging numbers reported in the literature, replicating them is not trivial in practice, as they typically require extensive hyperparameter search. To ensure robustness, we extensively study the performance of these methods. Below, we present the results of \model, and the analysis for baselines can be found in~\Cref{app:baselines}.

\noindent{\bf \model is robust to hyperparameter for early predictions.}
As introduced in~\Cref{eq:early_pred}, the hyperparameter $\gamma$ controls early predictions and triggers adaptive layer parallelism. To study its impact, we conduct experiments with Llama3.1-8B-Instruct, and~\Cref{fig:hyper_search} presents the speedup results, early prediction rate, and verification rejection rate, with varying values of $\gamma = [0, 0.2, 0.4, 0.6, 0.8, 1]$.
Specifically, $\gamma = 1$ essentially means no early predictions, as no token can have a probability greater than 1, while $\gamma = 0$ triggers early predictions at every inference step.
It can be observed that the early prediction rate decreases with increasing $\gamma$, while the verification rejection rate also decreases. The inference speed generally increases with the threshold unless it becomes excessively high. This finding suggests that \model works reliably as long as the threshold is within a reasonable range. We attribute this to the fine-tuned lightweight LM head, which enables early predictions with high confidence, resulting in a lower verification rejection rate (\eg, around 5\% when $\gamma=0.85$) while maintaining a comparatively high early prediction rate, thus achieving meaningful speedups.

\section{Related Work}
\label{sec:related_works}

\subsection{Early Exiting}

Early exiting enables language models to complete prediction at intermediate layers, reducing computational overhead and accelerating generation. Previous approaches achieved this by adding decision branches or language modeling heads at various depths~\citep{teerapittayanon2016branchynet,huang2018multi,elbayad2020Depth,schuster2022confident,yang2024predictive,raposo2024mixture,bae2023fast,varshney2023accelerating}.
Most of these methods alter both decoding speed and output quality, while our method strictly preserves the exact output of standard autoregressive decoding.
Notably, similar to our method, EESD~\citep{liu2024speculative} applies early exiting with a verification step to ensure output parity. However, it is limited to fixed-depth exiting and requires an extra decoder layer and full LM head, whereas AdaDecode uses only lightweight heads and allows dynamic-depth exiting, making it more efficient and flexible.
Moreover, EESD trains its auxiliary modules using cross-entropy loss and applies Thompson Sampling for stopping decisions. In contrast, AdaDecode optimizes its lightweight heads using KL divergence and employs a simple probability-thresholding strategy for draft termination.
We refer the readers to~\citet{khoshnoodi2024comprehensive} for a more detailed discussion.

\subsection{Speculative Decoding}
Speculative decoding~\citep{leviathan2023fast, chen2023accelerating} has emerged as an effective approach for accelerating generation.
Following the standard speculative decoding approach, most works in this line attempt to draft a single chain-based sequence of draft tokens and verify them in parallel~\citep{ kim2023speculative, hooper2023speed,he2024rest, fu2024break, liu2024kangaroo,liu2024speculative, qin2024optimized,zhang2024ssd,elhoushi2024layerskip,xia2024unlocking}.
Our method also falls into this category and in this work, we only compare with baselines that guarantee output parity with standard decoding.
Another line of research is tree-based speculative decoding, where multiple draft sequences are generated and verified in parallel using a tree-based attention mechanism~\citep{miao2023specinfer,cai2024medusa,ankner2024hydra,li2024eagle}.
While orthogonal to our main contribution, we further integrate adaptive layer parallelism to tree-based decoding and share our findings in Appendix~\ref{app:tree_specdec}, which we believe merit further investigation.

\section{Conclusion}

We introduced \model, a new approach designed to accelerate LLM decoding while preserving output consistency.
\model achieves this by adaptively generating the next tokens at intermediate layers using an adaptive layer parallelism with our newly introduced lightweight LM heads.
A verification step ensures consistency with standard autoregressive outputs.
Notably, it demonstrates consistent improvements in decoding throughput across various generation tasks compared to baselines, while imposing minimal constraints (\eg, no auxiliary model or fine-tuning of existing model parameters required).
Further discussion on limitations and future work is presented in~\Cref{app:limitation}.

\section*{Acknowledgments}
The authors would like to thank Tianyu Gao and Mengzhou Xia from Princeton NLP group for their valuable feedback and discussions.
This research was supported in part by the NVIDIA Academic Grant.
We thank anonymous reviewers and area chair for their constructive and insightful comments.

\section*{Impact Statement}
Our work focuses on improving the efficiency of autoregressive LLM decoding, a well-established objective within the machine learning community. While \model does not introduce novel ethical concerns beyond those already associated with autoregressive LLMs, it remains crucial to approach the development and deployment of language models responsibly, with careful attention to issues such as fairness, bias, and potential misuse.

\bibliography{icml2025}
\bibliographystyle{icml2025}


\newpage
\appendix
\onecolumn

\section{Proofs}
\label{app:proof}

\begin{table}[!h]
\centering
\small
\caption{Empirical validation of the last-layer LM head weights $E^*$ being full-rank. The rank (\ie, number of non-zero singular values) of $E^*$ is identical to the model’s hidden size, confirming that $E^*$ is indeed full-rank. Note that even the smallest singular value is significantly greater than 0, demonstrating that the matrix is full-rank not simply because of randomness or noise.}
\vspace{.5em}
\begin{tabular}{l c c c c}
\toprule
\textbf{Model} & \boldmath$E^*$ \textbf{Shape} & \textbf{\# of Singular Values} & \textbf{\# of Non-Zero Singular Values} & \textbf{Smallest Singular Value} \\
\midrule
LLama-3.1-8B & $128256 \times 4096$ & 4096 & 4096 & 1.45271 \\
CodeLlama-13B & $32016 \times 5120$ & 5120 & 5120 & 1.10411 \\
CodeLlama-34B & $32000 \times 8192$ & 8192 & 8192 & 0.83576 \\
\bottomrule
\end{tabular}
\label{table:singular_values}
\end{table}

\begin{lemma}
For any $\bs{E}^{(i)} \in \mathbb{R}^{|\mathcal{V}| \times d}$, there exists a $\bs{T}^{(i)} \in \mathbb{R}^{d \times d}$ such that $\bs{E}^{(i)} = \bs{E}^{*} \bs{T}^{(i)}$.
\end{lemma}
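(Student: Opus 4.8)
The plan is to prove the lemma constructively, exhibiting an explicit $\bs{T}^{(i)}$ built from a one-sided inverse of $\bs{E}^*$ and leveraging the empirically verified full-rank structure of the last-layer head. Recall from the validation in \Cref{table:singular_values} that $\bs{E}^* \in \mathbb{R}^{|\mathcal{V}| \times d}$ has rank exactly $d$, the model's hidden size; since $|\mathcal{V}| \gg d$ for every model considered, this means $\bs{E}^*$ has full column rank, and all $d$ of its singular values are strictly positive.

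First I would establish that the Gram matrix $\bs{E}^{*\top}\bs{E}^* \in \mathbb{R}^{d \times d}$ is invertible. It is symmetric and positive semidefinite by construction, and the full column rank of $\bs{E}^*$ excludes any nonzero vector from its kernel, so it is in fact positive definite; equivalently, its eigenvalues are the squared singular values of $\bs{E}^*$, all strictly positive by \Cref{table:singular_values}. This lets me form the left inverse $\bs{E}^{*+} := (\bs{E}^{*\top}\bs{E}^*)^{-1}\bs{E}^{*\top} \in \mathbb{R}^{d \times |\mathcal{V}|}$, which satisfies $\bs{E}^{*+}\bs{E}^* = \bs{I}_d$.

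With this in hand, I would define $\bs{T}^{(i)} := \bs{E}^{*+}\bs{E}^{(i)}$, which has the required shape $d \times d$ since $\bs{E}^{*+}$ is $d \times |\mathcal{V}|$ and $\bs{E}^{(i)}$ is $|\mathcal{V}| \times d$. It then remains to verify the target identity $\bs{E}^* \bs{T}^{(i)} = \bs{E}^{(i)}$ by substituting this definition and simplifying the product $\bs{E}^* \bs{E}^{*+} \bs{E}^{(i)}$ using the rank properties of $\bs{E}^*$ recorded above.

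I expect the verification $\bs{E}^* \bs{E}^{*+} \bs{E}^{(i)} = \bs{E}^{(i)}$ to be the main obstacle. The left-inverse relation $\bs{E}^{*+}\bs{E}^* = \bs{I}_d$ is immediate, but collapsing the product taken in the reverse order is the delicate step, and it is precisely here that the full-rankness of $\bs{E}^*$ certified by \Cref{table:singular_values} must be invoked to close the argument. I would therefore spend the bulk of the write-up making this cancellation rigorous via the singular value decomposition of $\bs{E}^*$, after which the construction of $\bs{T}^{(i)}$ follows routinely.
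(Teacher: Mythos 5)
Your construction of $\bs{T}^{(i)}$ is identical to the paper's: both set $\bs{T}^{(i)} = (\bs{E}^{*\top}\bs{E}^{*})^{-1}\bs{E}^{*\top}\bs{E}^{(i)}$ and reduce the lemma to the reverse-order cancellation $\bs{E}^{*}(\bs{E}^{*\top}\bs{E}^{*})^{-1}\bs{E}^{*\top}\bs{E}^{(i)} = \bs{E}^{(i)}$. The genuine gap is precisely the step you defer: that cancellation cannot be made rigorous by the SVD, or by any other means, for an \emph{arbitrary} $\bs{E}^{(i)}$. Writing the thin SVD $\bs{E}^{*} = \bs{U}\bs{\Sigma}\bs{V}^{\top}$ with $\bs{U} \in \mathbb{R}^{|\mathcal{V}| \times d}$, the product $\bs{E}^{*}\bs{E}^{*+}$ collapses to $\bs{U}\bs{U}^{\top}$, the orthogonal projection onto the $d$-dimensional column space of $\bs{E}^{*}$ inside $\mathbb{R}^{|\mathcal{V}|}$; it equals $\bs{I}_{|\mathcal{V}|}$ only when $|\mathcal{V}| = d$. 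Since $|\mathcal{V}| \gg d$ (128K vs.\ 4096 for Llama3.1-8B), $\bs{U}\bs{U}^{\top}\bs{E}^{(i)} = \bs{E}^{(i)}$ holds if and only if every column of $\bs{E}^{(i)}$ already lies in the column space of $\bs{E}^{*}$, which an arbitrary $\bs{E}^{(i)}$ need not satisfy. Full \emph{column} rank of $\bs{E}^{*}$---which is all that \Cref{table:singular_values} certifies---does not help; one would need full \emph{row} rank (rank $|\mathcal{V}|$), impossible when $|\mathcal{V}| > d$. A dimension count makes the obstruction stark: the image of $\bs{T} \mapsto \bs{E}^{*}\bs{T}$ has dimension at most $d^{2}$, while the space of all $\bs{E}^{(i)}$ has dimension $|\mathcal{V}|\,d$, so no sharpening of the cancellation argument can recover the literal ``for any $\bs{E}^{(i)}$'' claim.

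In fairness, you have stalled at exactly the point where the paper's own proof is weakest: the paper bridges this step by asserting that ``since $\bs{E}^{*}$ is full rank, $\bs{E}^{(i)}$ lies in the column space of $\bs{E}^{*}$,'' which is the same inference you hoped to extract from the SVD and is equally unavailable for arbitrary $\bs{E}^{(i)}$---full column rank of a tall matrix does not make its column space all of $\mathbb{R}^{|\mathcal{V}|}$. The statement is salvageable only with an added hypothesis (e.g., the columns of $\bs{E}^{(i)}$ lie in the column span of $\bs{E}^{*}$), or under the softer reading that the reparameterization retains enough expressiveness for the heads one actually needs to learn. So: your proposal mirrors the paper's construction and correctly isolates the delicate step, but the plan you give for closing it would fail, because the step is not closable as stated.
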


\begin{proof}

We first prove that $\bs{E}^{*}$ (the last-layer LM head weights) is full-rank, and then present an explicit form of $\bs{T}^{(i)}$ that satisfies $\bs{E}^{(i)} = \bs{E}^{*} \bs{T}^{(i)}$.
During LLM pretaining, the last-layer LM head weights $\bs{E}^{*}$ are trained together with last layer hidden states $\bs{H}^{*}\in \mathbb{R}^{|\mathcal{D}| \times d}$ ($|\mathcal{D}|$ is the total number of training tokens in the corpus) to learn the ground-truth next-token prediction probability $\bs{P}^{*} \in \mathbb{R}^{|\mathcal{D}| \times |\mathcal{V}|}$:
$$
\bs{P}^{*} \approx \text{Softmax}(\bs{H}^{*} \bs{E}^{*\top}).
$$
As the softmax function cannot increase matrix rank~\citep{Kanai2018SigsoftmaxRO}, we have $\text{rank}(\text{Softmax}(\bs{H}^{*} \bs{E}^{*\top})) \le \text{rank}(\bs{H}^{*} \bs{E}^{*\top}) \le \min \{ \text{rank}(\bs{E}^{*}), \text{rank}(\bs{H}^{*}) \} \le \text{rank}(\bs{E}^{*})$.
Thus, to achieve a good approximation of $\bs{P}^{*}$, $\text{rank}(\bs{E}^{*})$ must closely match $\text{rank}(\bs{P}^{*})$.
Given the complexity and diversity of natural language, the empirical distribution $\bs{P}^{*}$ derived from the pretraining data is extremely high-rank~\citep{Yang2018BreakingTS}.
Therefore, to accurately model this high-rank distribution, $\bs{E}^{*}$ must be full-rank, as indicated by Lemma 1 in~\citet{Yang2018BreakingTS}: the language modeling problem can be completely solved (\ie, achieve a 0 loss) if $\text{rank}(P) < d$. However, this is practically infeasible due to the inherent complexity of natural language, suggesting that $\text{rank}(P) > d$.
In addition, we also provide an empirical validation in~\Cref{table:singular_values}, which further confirms that $\bs{E}^{*}$ is indeed full-rank.

Given $\bs{E}^{*}$ being full-rank, $\bs{U} \coloneq \bs{E}^{*\top} \bs{E}^{*}$ is invertible.
We can define $\bs{T}^{(i)}$ as $\bs{T}^{(i)} = \bs{U}^{-1} \bs{E}^{*\top} \bs{E}^{(i)}$, then:
\begin{align*}
    \bs{E}^{*} \bs{T}^{(i)} = \bs{E}^{*} \bs{U}^{-1} \bs{E}^{*\top} \bs{E}^{(i)}
    = \left(\underbrace{\bs{E}^{*} (\bs{E}^{*\top} \bs{E}^{*})^{-1} \bs{E}^{*\top}}_{=\bs{P}}\right) \bs{E}^{(i)}
\end{align*}
Note that $\bs{P} = \bs{E}^{*} (\bs{E}^{*\top} \bs{E}^{*})^{-1} \bs{E}^{*\top}$ is the projection matrix onto the column space of $\bs{E}^{*}$ (as $\bs{P}^2 = \bs{P}$).
Since $\bs{E}^{*}$ is full rank, $\bs{E}^{(i)}$ lies in the column space of $\bs{E}^{*}$.
Therefore, applying $\bs{P}$ to $\bs{E}^{(i)}$ gives $\bs{E}^{(i)}$ itself, confirming that $\bs{E}^{(i)}$ can be expressed as $\bs{E}^{*} \bs{T}^{(i)}$.

\end{proof}

\section{Benchmark Details}\label{app:dataset}

We evaluate our method on a diverse set of text generation tasks, including text summarization, code generation, and mathematical reasoning, covering a broad spectrum of language model capabilities.

\noindent{\bf Text summarization.}
For text summarization, we use the widely adopted extreme summarization (XSum) dataset~\citep{narayan2018don}, where the models are prompted to produce a single-sentence summary of a news article, testing their ability to identify and precisely summarize the most salient information in a coherent sentence. Following previous works~\citep{zhang2024ssd}, we randomly sample 1K instances from the test split for evaluation, and 10K instances from the training split for training the lightweight LM head.

\noindent{\bf Code generation.}
For code generation, we evaluate our method on the HumanEval~\citep{chen2021evaluating} benchmark, which assesses Python programming skills through a variety of coding problems, ranging from basic tasks to complex problem-solving challenges. Since the standard HumanEval benchmark does not provide a training set, we use the entire MBPP~\citep{austin2021program} dataset for training, which contains a set of crowd-sourced Python programming problems designed to be solvable by entry-level programmers, covering programming fundamentals and standard library functionality.
This results in a total of 974 training samples and 164 test samples for this task.

\noindent{\bf Mathmatical reasoning.}
We use GSM8K~\citep{cobbe2021training} as the benchmark for mathematical reasoning, which contains diverse grade-school math word problems created by human problem writers. The dataset consists of 7.5K training problems and 1K test problems. These problems typically require multiple reasoning steps to solve and involve performing a sequence of basic arithmetic operations (such as addition and subtraction) to arrive at the final answer. The goal of this task is specifically to evaluate the LLM’s ability in multi-step mathematical reasoning.

\section{Baselines}\label{app:baselines}
By default, we employ three instruction-tuned models as the backbone, ranging from 8B to 34B parameters (\ie, LLama3.1-8B-Instruct, CodeLlama-13B-Instruct, and CodeLlama-34B-Instruct), with the default chat template for all compared methods.
Since LookAhead is a training-free method, we directly use the optimized hyperparameter configuration in their released codebase to reproduce the results of backbone models on three benchmarks.
For SpecDecode, we consider three configurations: (1) Llama3.1-8B-Instruct as the main model (\ie, verifier) with Llama3.2-1B-Instruct as the assistant model (\ie, drafter), (2) CodeLlama-13B-Instruct as the verifier with CodeLlama-7B-Instruct as the drafter, and (3) CodeLlama-34B-Instruct as the verifier with CodeLlama-13B-Instruct/CodeLlama-7B-Instruct as the drafter.

For Self-SpecDecode and its extension SWIFT, we adopt two backbone models: (1) CodeLlama-13B-Instruct, and (2) CodeLlama-34B-Instruct since their released codebase does not support recent models like the Llama3 or newer series.
Following~\citet{zhang2024ssd}, we adopt an adaptive confidence threshold strategy and set the initial threshold $\gamma^0$ = 0.6, the max number of draft token $K$ = 12, and the max number of generated token $T$ = 512. 
We run the Bayesian optimization search for 200 iterations to determine skipped layers for configuring the drafter model.
We use 4 instances randomly sampled from the training set of each task for the Bayesian optimization search as suggested in the original implementation.
More implementation details regarding training and inference can be found in~\Cref{app:impl}.

\subsection{Hyperparameter Sensitivity Study on Baseline Methods}

\begin{figure*}[!t]
\centering
\begin{subfigure}[b]{0.33\textwidth}
    \includegraphics[width=\textwidth]{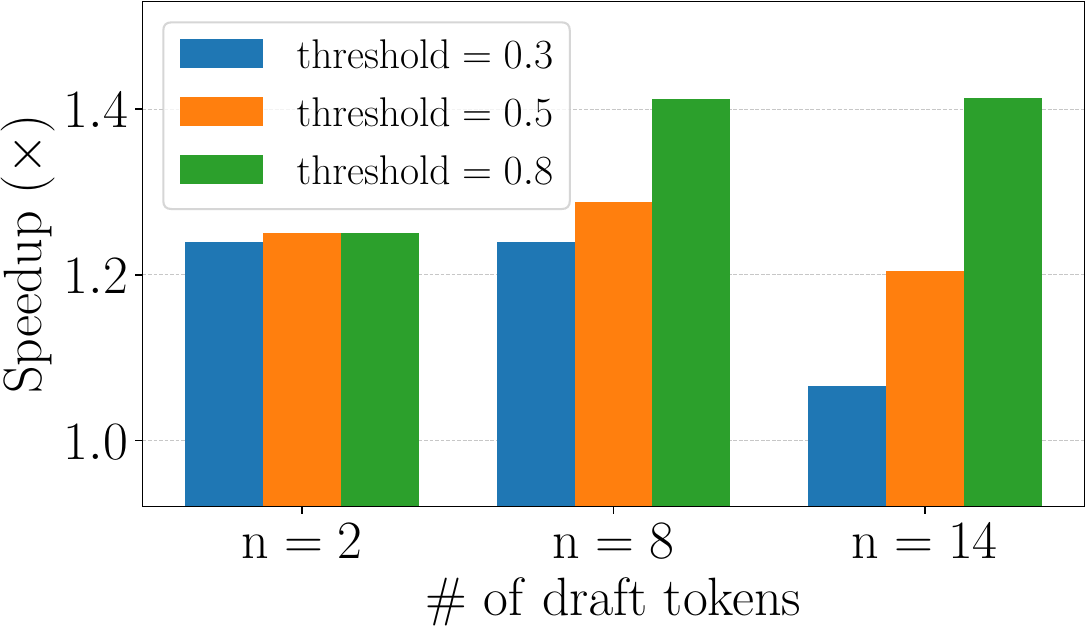}
    \caption{SpecDec with 7B drafter.\label{fig:specdec_7b_drafter}}
\end{subfigure}
\hfill
\begin{subfigure}[b]{0.33\textwidth}
    \includegraphics[width=\textwidth]{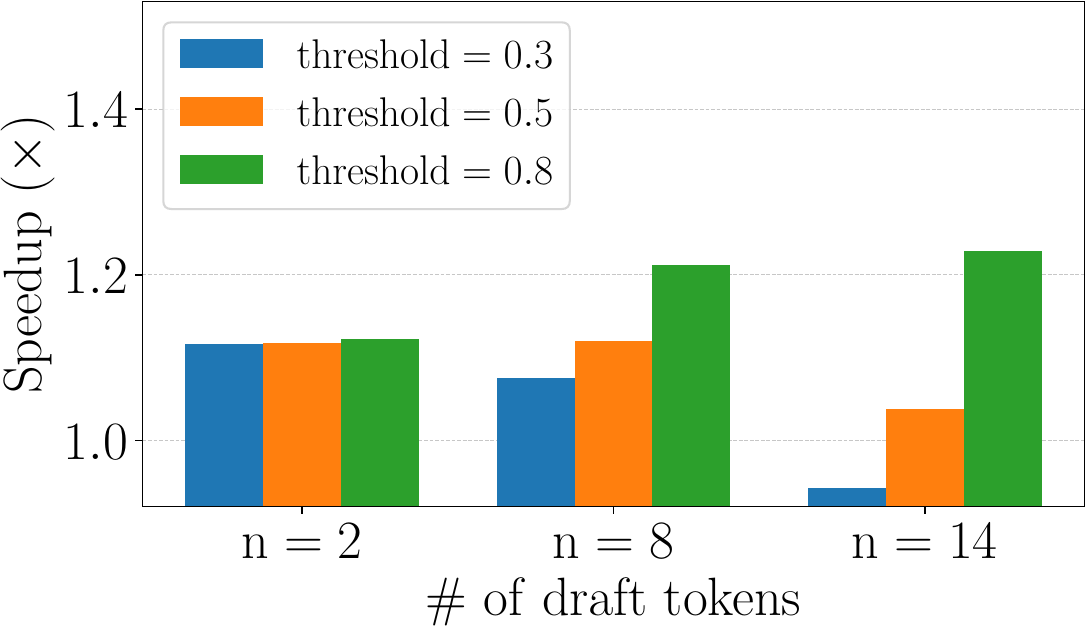}
    \caption{SpecDec with 13B drafter.\label{fig:specdec_13b_drafter}}
\end{subfigure}
\hfill
\begin{subfigure}[b]{0.33\textwidth}
    \includegraphics[width=\textwidth]{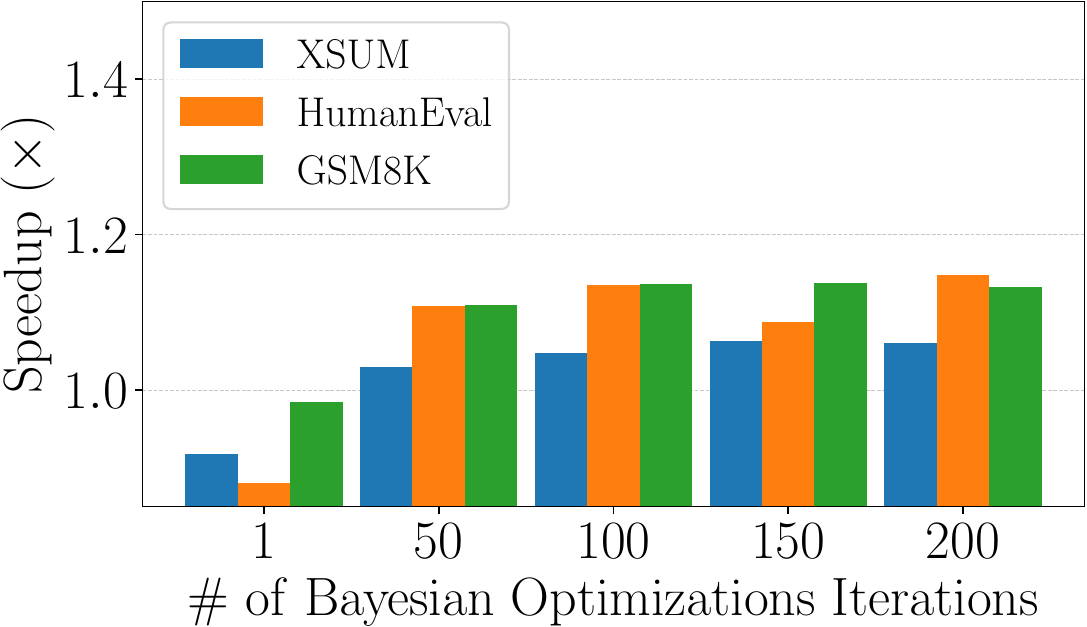}
    \caption{Self-SpecDecode BO search.  \label{fig:ssd_bo_iteration}}
\end{subfigure}
\caption{Hyperparameter study of baseline models with CodeLlama-34B as the backbone model. (a) SpecDecode with CodeLlama-7B as drafter. (b) SpecDecode with CodeLlama-13B as drafter. (c) Self-SpecDecode optimized with Bayesian optimization (BO) search.}
\label{fig:baseline_hyper_search}
\vspace{-2em}
\end{figure*}

\noindent{\bf SpecDecode is sensitive to hyper-parameter configurations.}~\Cref{fig:baseline_hyper_search} presents the hyperparameter study for SpecDecode, indicating that it is challenging to achieve consistent speedup without careful tuning. 
In this study, we use CodeLlama-34B-Instruct as the backbone model, and performed a comprehensive hyperparameter search on the HumanEval benchmark, with CodeLlama-7B-Instruct and CodeLlama-13B-Instruct as the assistant models, respectively. We explored various configurations with different max number of draft tokens $n = [2, 8, 14]$ and confidence threshold = $[0.3, 0.5, 0.8]$.
\Cref{fig:specdec_7b_drafter} and \Cref{fig:specdec_13b_drafter} reveal that significant speedup variance exists across different settings. 
Moreover, the variance becomes even larger when the confidence threshold shifts. 
Notably, when $n = 14$ and the threshold is 0.3, the speedup of 13B drafter is merely 0.944, which is even worse than standard decoding. 
These findings highlight that careful tuning hyperparameters is essential to optimize SpecDec's speedup performance and avoid potential slowdowns.

\noindent{\bf Self-SpecDecode and its variant SWIFT requires task-specific architecture for optimized speedup performance}. Recent works like Self-SpecDecode have been proposed as a training-free acceleration method to be employed in a plug-and-play manner.
However, it requires an additional Bayesian optimization (BO) process to first obtain a set of layers to skip (as the drafter) before it can be adopted for inference.
The time-consuming BO process greatly limits its practicality, moreover, it requires a set of examples as validation data to select the desired drafter.
To further investigate the effect of this search process, we optimize CodeLlama-34B-Instruct with different numbers of BO iterations, as shown in Figure~\ref{fig:ssd_bo_iteration}.
In general, more number of iterations could lead to improved evaluation performance.
Yet, on HumanEval, it demonstrates a decrease in performance in the process.
This implies scaling the number of iterations does not monotonically reflect better results, and the iteration process might require careful tuning to select an optimized drafter for inference.

\section{Implementation Details}\label{app:impl}

\noindent {\bf Training details.}
The lightweight LM heads in our method are trained through full-parameter fine-tuning using the alignment-handbook repository\footnote{\url{https://github.com/huggingface/alignment-handbook}} with 8$\times$Nvidia A100 GPUs.
Specifically, we utilize DeepSpeed ZeRO-3~\citep{rajbhandari2020zero} along with FlashAttention~\citep{dao2024flashattention} for distributed training, and we enable BF16 mixed precision training to enhance training efficiency.
We generate on-policy data to train the lightweight LM heads by prompting the off-the-shelf Llama3.1-8B-Instruct or CodeLlama-34B-Instruct to produce responses using greedy sampling for each benchmark.
By default, our models are trained using the Adam optimizer~\citep{kingma2014adam} for 100 epochs, with a batch size of 128, a learning rate of 5e-3, and a cosine learning rate schedule with 3\% warmup steps.

\noindent{\bf Inference details.} During inference, we adopt the zero-shot evaluation by directly prompting the model to generate responses and apply the corresponding chat templates to format the prompts, as all backbone models used in our work are instruction-tuned versions. The framework is implemented using the HuggingFace Transformers library,\footnote{\url{https://github.com/huggingface/transformers}} and we set the sampling temperature to zero in all methods for a reproducible comparison. Following~\citet{zhang2024ssd}, the maximum number of new tokens is set to 512. The threshold $\gamma$ in \Cref{eq:early_pred} is set to 0.75, and to ensure a balance between the early prediction rate and the rejection rate, we limit the maximum number of early predictions to 5.

\section{Integration with Tree-based Speculative Decoding}
\label{app:tree_specdec}

\paragraph{Naive combination of adaptive layer parallelism and tree-based decoding does not yield performance improvement.}

As discussed in Section~\ref{sec:related_works}, tree-based speculative decoding methods~\citep{miao2023specinfer, cai2024medusa} and early exiting techniques are orthogonal to each other: the former accelerates decoding horizontally by generating multiple tokens across time steps, while the latter reduces per-token computation through vertical acceleration via adaptive layer parallelism enabled by early predictions. These approaches address different bottlenecks in the decoding process.

To explore the effect of adaptive layer parallelism in the context of tree-based decoding, we augment Llama3.1-8B-Instruct by inserting and training two additional lightweight LM heads at the final layer (layer 32) and three lightweight heads at each of two intermediate layers (layers 16 and 24). 
These extra LM heads support multi-hop next-token predictions (\ie, one-hop, two-hop, and three-hop), and we construct draft trees following a predefined structure used in Medusa~\citep{cai2024medusa}.
To establish the vanilla tree-based decoding baseline, only the heads at the final layer are used to generate multi-hop predictions, while intermediate-layer heads are inactive. Like Medusa, the original LM head (for one-hop prediction) at the final layer is used to perform verification and determine the final output tokens.

In contrast, to evaluate the impact of adaptive layer parallelism, we further allow the model to generate draft trees at intermediate layers using the lightweight heads. Draft tokens produced at shallower layers can be verified by subsequent deeper layers and ultimately finalized by the last layer. Although this introduces additional verification costs compared to the vanilla tree-based decoding baseline, our intuition is that if early draft trees are reasonably reliable with high acceptance rates, the extra verification cost could be offset by reduced computation from early exits. 

We train and evaluate this model on the HumanEval task, and the results of this exploratory study demonstrate that combining vertical acceleration (via adaptive layer parallelism) with horizontal acceleration (via tree-based speculative decoding) generally leads to greater speedups than using vertical acceleration alone.
However, it is unexpected that allowing that model to generate draft trees at intermediate layers does not appear to outperform the vanilla tree-based method that generates all draft tokens only at the final layer.
\textbf{In other words, the naive combination of vertical and horizontal acceleration does not yield improvements over horizontal acceleration alone}, suggesting that a more thoughtful integration strategy is needed to fully exploit their complementary strengths.
We hypothesize several possible reasons for this observation.
First, draft tokens generated from intermediate layers are often less reliable, resulting in a higher rejection rate during final-layer verification and ultimately diminishing the potential efficiency gains. 
Second, different layers may favor different tree structures—shallower layers, due to their lower confidence, may benefit from narrower trees with fewer nodes at deeper hops (\ie, making fewer two-hop or three-hop predictions), while deeper layers can support wider trees with more aggressive exploration.
Therefore, naively applying a predefined tree structure across all layers may lead to suboptimal results.
These findings suggest that, while combining vertical and horizontal acceleration holds promise, further research is needed to address these challenges and fully realize the potential of their integration.

\section{Limitations and Future Work}\label{app:limitation}

\paragraph{Limitations.}
Our work focuses on accelerating LM decoding without directly improving the quality of the outputs, so \model may encounter similar issues as general LM decoding, such as hallucinations~\citep{Huang2023ASO,liu2025selfelicit} and generating unsafe content~\citep{Zhang2023SafetyBenchET,yu-etal-2024-cosafe,chen2024wapiti}. 
Additionally, while \model consistently accelerates standard autoregressive decoding, it may occasionally incur higher FLOPs, particularly when an early-predicted token does not match the gold token, necessitating a reversion.
However, such cases are rare in practice, and we find that strict parity with standard autoregressive decoding is not always necessary. 
The early-predicted tokens, even when they differ from the final prediction, are typically still meaningful. 
Therefore, one might consider relaxing strict consistency requirements to avoid unnecessary computational waste.

\paragraph{Future work.}
In the future, we plan to integrate \model with other efficiency-enhancing techniques like model pruning and quantization.
Model pruning~\citep{Xia2023ShearedLA} offers the potential to reduce model size and parameter space by identifying and removing less critical weights or neurons. 
When combined with \model, pruning could lead to even faster decoding times and lower memory usage without significantly impacting performance.
Similarly, \model can also be seamlessly integrated with quantization~\citep{Dettmers2023QLoRAEF}, which reduces the precision of model weights and activations, substantially lowering memory and compute requirements.
It would also be interesting to apply the layer-dropping training strategy, as employed in LayerSkip~\citep{elhoushi2024layerskip}, to our method and investigate the trade-offs between training efficiency and model performance. Additionally, exploring the Thompson sampling strategy used in EESD~\citep{liu2024speculative} as an alternative to our current threshold-based stopping criterion may also provide complementary benefits and enhance overall flexibility.
On the other hand, as discussed in Appendix~\ref{app:tree_specdec}, we aim to explore effective strategies for integrating tree-based methods~\citep{cai2024medusa,miao2023specinfer} with \model by generating multiple candidate tokens at intermediate layers of the model, which may yield even better speedups.
These directions could be particularly valuable for efficient inference on mobile and edge devices.

\end{document}